
\documentclass{article}

\usepackage{microtype}
\usepackage{graphicx}
\usepackage{subfigure}
\usepackage{booktabs} 

\usepackage{hyperref}



\usepackage[accepted]{icml2023}

\usepackage{amsmath}
\usepackage{amssymb}
\usepackage{mathtools}
\usepackage{amsthm}
\usepackage{multirow}

\usepackage[capitalize,noabbrev]{cleveref}

\theoremstyle{plain}
\newtheorem{theorem}{Theorem}[section]
\newtheorem{proposition}[theorem]{Proposition}

\theoremstyle{definition}
\newtheorem{definition}[theorem]{Definition}

\theoremstyle{remark}

\usepackage[textsize=tiny]{todonotes}

\icmltitlerunning{Population Expansion for Training Language Models with Private Federated Learning}

\begin{document}

\twocolumn[
\icmltitle{Population Expansion for Training Language Models with \\ Private Federated Learning}



\icmlsetsymbol{equal}{*}

\begin{icmlauthorlist}
\icmlauthor{Tatsuki Koga}{intern}
\icmlauthor{Congzheng Song}{comp}
\icmlauthor{Martin Pelikan}{comp}
\icmlauthor{Mona Chitnis}{comp}
\end{icmlauthorlist}

\icmlaffiliation{comp}{Apple}
\icmlaffiliation{intern}{UC San Diego, work done while interning at Apple}

\icmlcorrespondingauthor{Congzheng Song}{csong4@apple.com}

\icmlkeywords{Machine Learning, ICML}

\vskip 0.3in
]



\printAffiliationsAndNotice{}  

\begin{abstract}
Federated learning (FL) combined with differential privacy (DP) offers machine learning (ML) training with distributed devices and with a formal privacy guarantee.
With a large population of devices, FL with DP produces a performant model in a timely manner. 
However, for applications with a smaller population, not only does the model utility degrade as the DP noise is inversely proportional to population, but also the training latency increases since waiting for enough clients to become available from a smaller pool is slower.
In this work, we thus propose expanding the population based on domain adaptation techniques to speed up the training and improves the final model quality when training with small populations.
We empirically demonstrate that our techniques can improve the utility by 13\% to 30\% on real-world language modeling datasets.
\end{abstract}

\section{Introduction}
Federated learning (FL) \citep{mcmahan_communication-efficient_2017} enables training machine learning (ML) models using on-device data and is widely used in our daily lives as usage of mobile devices, e.g., smartphones, smart watches, and smart speakers, increases.
Although FL, by design, does not require raw data to be transmitted from devices, privacy breaches can happen by transmitting model gradients to the central server.
Thus, FL algorithms are modified to satisfy differential privacy (DP) \citep{mcmahan_learning_2018-1} to provide a formal privacy guarantee.
We refer this learning framework as private federated learning (PFL).

Successful ML models trained with PFL typically require the number of devices sampled at each round, \textit{cohort size}, to be large enough to reduce the detrimental impact of DP noise on the model utility~\cite{anil-etal-2022-large}. 
The requirement of large cohort size, which is easily met with hundreds of millions of devices, can be hard to fulfill for applications with device-constrained populations. 
For a motivating example, to train a language model (LM) with PFL for automatic speech recognition (ASR) system in a virtual assistant, the on-device training data are transcribed speech. 
For popular languages, such as English or  Chinese, there are ample devices with transcriptions. 
However, for less popular languages such as Romanian or Swahili, the population with data is orders of magnitude smaller due to the limited speaker base.
In such small populations, as we will show in Section~\ref{sec:sample}, the server needs to spend much longer waiting for a full cohort of devices to become available in each iteration, which is impractical for models that require thousands of iterations to converge. 
Thus, PFL has the tradeoff among privacy, utility, and \emph{latency} for device-constrained applications.

\paragraph{Our contributions} 
In this work, we develop approaches to expand the population size to address the latency bottleneck for PFL in the device-constrained scenarios.
We propose to use data from different applications than the target application to augment the training data, e.g. there are more devices with typed text than those with audio transcriptions as the messaging application is used more frequently than a virtual assistant.
Population expansion for PFL has three benefits: (1) training will be faster as there are more devices available, (2) DP noise scale will be smaller from amplification by subsampling~\cite{wang2019subsampled} by making population size larger, and (3) sampling error will be smaller.
We explore combinations of various domain adaptation techniques and show that they outperform naively augmenting the devices from other sources.
We focus on training LMs and evaluate the proposed approaches on public benchmark datasets including Reddits Comments and Common Voice. 
We demonstrate our methods can expand the population size by 10 times, which significantly reduces the latency and achieves better model utility. 

\subsection{Related Work}
Prior works on domain adaptation in the LM applications focuses on centralized training.
\citet{jiang-zhai-2007-instance} explored instance weighting with importance sampling to reweight the training objective for domain adaptation.
\citet{moore_intelligent_2010} selected and used a portion of non-domain-specific language data for domain-specific LM training.
\citet{moriokal_language_2018} extended LM neural networks (NNs) to have domain-specific and domain-shared representations so that those representations are learned separately.
\citet{gururangan_demix_2021} focused on transformer model and modify the model architecture to have domain-specific layers.
More recently, \citet{chronopoulou_efficient_2022-1} adopted hierarchical network structures for training on data from a larger number of domains, where models are gradually trained along with the hierarchy in a top-down manner.

With regards to domain adaptation in the federated setting, prior works address the setting where the clients and the server own data from different domains \citep{peng_federated_2019-1, yao_federated_2022}.
\citet{shen_fedmm_2021} extended the adversarial domain adaptation technique to the federated setting, but their main focus is cross-silo FL, where the number of clients is much smaller.
\citet{peterson_private_2019} also proposed a domain adaptation technique in cross-silo FL with differential privacy, which properly combines general and specific models.


\section{Preliminaries}
\paragraph{Federated Learning (FL)}~\citep{mcmahan_communication-efficient_2017} enables model training on multiple devices, each having a separate dataset, without sharing on-device dataset with a central server.
In particular, we focus on \textit{cross-device} FL where the number of clients is very large, as opposed to cross-silo FL where client population is small.
The standard iterative procedure for training machine learning models executes at each iteration $t$:
(1) the central server samples a set of clients $\mathcal{C}_t$ from the population,
(2) each sampled client $i\in \mathcal{C}_t$ downloads the shared model parameter $\theta_t$ from the server and locally trains the model on its own data to produce a local model $\theta_i$, 
(3) each sampled client $i$ sends back the model difference $\Delta_{t,i} = \theta_i - \theta_t$ to the server, and
(4) the server aggregates the model differences as a ``pseudo-gradient'' $\Delta_t = \frac{1}{|\mathcal{C}_t|} \Delta_{t,i}$ and uses it to update $\theta_t$ with any standard optimizer.

\paragraph{Differential Privacy (DP)} provides strong privacy protections for sensitive data on device. 
DP is formally defined as follows:
\begin{definition}[$(\epsilon, \delta)$-DP \citep{dwork_calibrating_2006}]
A randomized algorithm $M$ satisfies $(\epsilon, \delta)$-DP if for any neighboring datasets $D,D^{\prime}$ and for any $S\subseteq \mathrm{range}(M)$,
\begin{align*}
    \Pr [M(D) \in S] \leq \exp (\epsilon) \Pr [M(D^\prime) \in S] + \delta.
\end{align*}
\end{definition}
We say two datasets $D, D^\prime \in \mathcal{X}$ are neighboring if they differ on at most an individual's participation.
Two additional steps are added to the FL algorithm to ensure a DP guarantee:
(1) each sampled client clips the model difference before sending it back to have a bounded norm, and 
(2) the server applies a DP building block, commonly the Gaussian mechanism \cite{dwork_algorithmic_2014}, when aggregating the model differences to get the \textit{noisy} pseudo-gradient.
We focus on using the Gaussian mechanism for aggregating the model differences in this work.
The noise variance is then calibrated by the moment accountant \cite{abadi_deep_2016, mironov_renyi_2017-2, mironov_renyi_2019} with fixed sampling rate $q$ (fraction of clients sampled in each iteration), number of training iterations $T$, and privacy budgets $(\epsilon, \delta)$.

\section{Expanding Population in PFL}
\subsection{Device Sampling Latency} \label{sec:sample}
We first formulate how  population size $N$ impacts latency in PFL.
In each round of PFL, \textit{cohort size} $C \approx Nq$ of devices are sampled to participate in training, where $q$ is the device sampling probability to provide an amplification on privacy~\cite{wang2019subsampled}.
Server tends to over-sample by using a slightly larger $q>\frac{C}{N}$ to improve the latency. 
In reality, only a proportion of devices satisfying certain conditions (e.g. locked, charging and on Wi-Fi) are eligible for training and devices might dropout or abort training~\cite{bonawitz2019towards,paulik2021federated}, and we denote this ratio of eligible devices as $p$. 
Therefore, if $C$ is larger than $Npq$, we need to wait until enough devices become available to participate before updating the model.

More formally, assume $Npq < C$, we model the latency to wait $C-Npq$ devices more to become available and be sampled as follows. 
Let $m = N-Np$ be the number of current unavailable devices, $k = C-Npq$ be the number of devices needed for current PFL iteration. 
\begin{proposition}
Assume that the time for the $i$-th unavailable device becoming available and being sampled for training is $T_i\sim \mathrm{Exponential} (\lambda)$. 
Let $U_k$ be the random variable which describes the time when the first $k$ devices become available and are sampled. Then
\begin{equation}
\label{eq:latency}
\frac{1}{\lambda}\cdot\frac{C-Npq}{N(1-p)+1}\leq \mathbb{E}[U_k]  \leq \frac{C}{\lambda(N-C)}.
\end{equation}
\end{proposition}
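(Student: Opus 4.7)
The plan is to identify $U_k$ as the $k$-th order statistic of $m = N(1-p)$ i.i.d.\ $\mathrm{Exp}(\lambda)$ waiting times and then bound its expectation from both sides.

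By the memoryless property of the exponential distribution, conditional on the first $j-1$ arrivals the next inter-arrival gap is the minimum of $m - j + 1$ fresh $\mathrm{Exp}(\lambda)$ variables, and is therefore itself $\mathrm{Exp}(\lambda(m - j + 1))$. Summing these independent gaps gives the classical formula
\begin{equation*}
\mathbb{E}[U_k] \;=\; \frac{1}{\lambda}\sum_{j=1}^{k}\frac{1}{m-j+1},
\end{equation*}
a sum of $k$ positive terms, each lying in $\bigl[\tfrac{1}{m},\,\tfrac{1}{m-k+1}\bigr]$. From this closed form both sides of \eqref{eq:latency} reduce to elementary estimates.

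For the lower bound I would lower-bound every summand by $\tfrac{1}{m+1} = \tfrac{1}{N(1-p)+1}$, giving $\mathbb{E}[U_k] \geq \tfrac{k}{\lambda(m+1)} = \tfrac{C-Npq}{\lambda(N(1-p)+1)}$, which is exactly the stated inequality. The slightly sharper $\tfrac{k}{\lambda m}$ also works; the $+1$ only makes the statement a touch looser and avoids the corner case $p=1$.

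For the upper bound I would upper-bound every summand by $\tfrac{1}{m-k+1}$ to obtain $\mathbb{E}[U_k] \leq \tfrac{k}{\lambda(m-k+1)}$, and then show $\tfrac{k}{m-k+1} \leq \tfrac{C}{N-C}$. Substituting $k = C - Npq$ and $m = N(1-p)$, cross-multiplying, and cancelling common terms collapses this to $Np(C - Nq) \leq C$. This is the one real obstacle, and it closes immediately using the oversampling convention $q > C/N$ stated just above the proposition: that forces $C - Nq < 0$, so $Np(C - Nq) \leq 0 \leq C$. Everything else is routine order-statistics bookkeeping.
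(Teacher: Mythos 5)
Your proposal is correct and follows essentially the same route as the paper: both reduce \(\mathbb{E}[U_k]\) via the memoryless/minimum properties to the sum \(\frac{1}{\lambda}\sum_{x=m-k+1}^{m}\frac{1}{x}\) and both close the upper bound with the oversampling condition \(q \geq C/N\). The only difference is in bounding that sum: the paper compares it to integrals of \(1/x\) and then applies \(1-\tfrac{1}{x}\leq \ln x \leq x-1\), whereas you bound each summand by the smallest and largest terms, which is more elementary and even yields slightly tighter intermediate bounds (\(k/m\) and \(k/(m-k+1)\) versus the paper's \(k/(m+1)\) and \(k/(m-k)\)); your algebraic reduction of \(\tfrac{k}{m-k+1}\leq \tfrac{C}{N-C}\) to \(Np(C-Nq)\leq C\) checks out, granting the implicit well-posedness assumptions \(m\geq k\) and \(N>C\) that the paper also uses.
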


\begin{figure}
    \centering
    \includegraphics[width=0.48\textwidth]{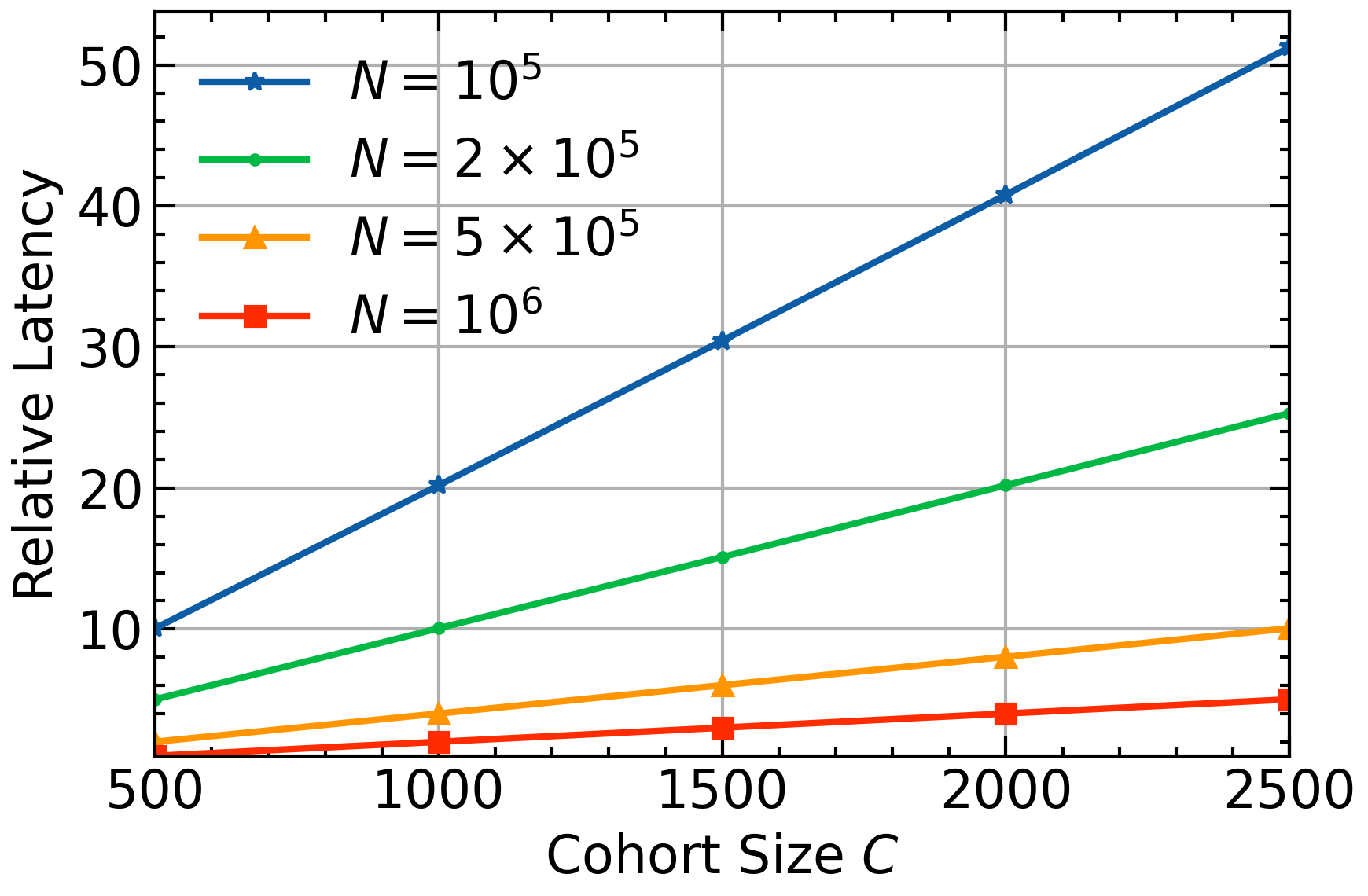}
    \caption{Relative latency estimated with Equation~\ref{eq:latency} for different cohort sizes $C$ and population sizes $N$.}
    \label{fig:latency}
\end{figure}
We defer the proof to Appendix~\ref{sec:appendix}. 
We use exponential time model since it is a common choice for modeling training time in the distributed scenario~\cite{lee2017speeding,tandon2017gradient,nguyen2022federated}. 

From the above proposition we see that the expected latency $U_k$ is inversely proportional to the population size, i.e. the smaller the population size, the longer the server needs to wait for enough devices to become available in each iteration. 
Figure~\ref{fig:latency} illustrates the relationship between latency and population size. 

\subsection{Domain Adaptation for Expanding Population}
The small population situation happens often when building task-specific LMs, where potential data sources are scarce, e.g. training a LM on Swahili spoken texts as a part of virtual assistant system. 
It is a challenging task since only a small number of users are frequent users of a virtual assistant and have Swahili speech on their devices.
Nonetheless, for such device-constrained locales, there could be other data sources, e.g., typed texts, with larger population. 
This motivates us to expand the population by exploiting another text source with a different distribution to train the LM for the target data source, which can be cast as a domain adaptation (DA) problem.
Following DA convention, we denote data from other source applications with larger population as \emph{source domain} $\mathcal{S}$, and data from target application with smaller population as \emph{target domain} $\mathcal{T}$. 

\paragraph{Goal} We wish learn a global model that minimizes the objective $\mathbb{E}_{x\sim \mathcal{T}} [L(x)]$, where $L$ is the loss function, with data from $\mathcal{S}\cup \mathcal{T}$ under a fixed privacy budget $(\epsilon, \delta)$.
The latency-utility trade-off should be much better than training in $\mathcal{T}$ alone.

\paragraph{Instance weighting (IW)}
Naively training with devices sampled from $\mathcal{S}\cup \mathcal{T}$ would bias towards $\mathcal{S}$ due to its larger population. 
To remedy this sampling bias, we apply instance weighting~\citep{jiang-zhai-2007-instance} on the training objective: 
\begin{equation}
    \mathbb{E}_{x\sim \mathcal{S}\cup\mathcal{T}}[w(x) L(x)],
\end{equation}
where $w(x) = p_\mathcal{T}(x) / p_\mathcal{\pi}(x)$ is the importance weight, $\pi \in \{\mathcal{S}, \mathcal{T}\}$ denotes which domain $x$ is from and $p_\mathcal{\pi}(x)$ is the data density function for domain $\pi$.
As $p_\mathcal{\pi}(x)$ has to be estimated privately, we choose to approximate it with unigram likelihood $\hat{p}_\pi(x)=\prod_i \hat{u}_\pi(x_i)$ as unigram frequency $\hat{u}_\pi$ can be efficiently learned with a relative small privacy budget. 

The product of unigrams in $\hat{p}_\pi(x)$ can lead to bipolarized density estimation, and thus unstable importance weights. 
We instead use relative importance weight~\cite{yamada2013relative} to provide a more robust estimation:
\begin{equation}
\label{eq:iw}
    w(x) = \frac{\hat{p}_\mathcal{T}(x)}{\alpha \hat{p}_\mathcal{T}(x) + (1-\alpha) \hat{p}_\pi(x)},
\end{equation}
where $\alpha$ is the proportion of the devices with data from $\mathcal{T}$. 

The overall PFL training procedure with IW is: (1) learn the unigram frequency $\hat{u}_\pi$ for $\pi \in \{\mathcal{S}, \mathcal{T}\}$ with privacy budget $(\epsilon_0, \delta_0)$ which can be done with private federated statistics~\cite{mcmillan2022private}, and (2) train model using objective weighted by Equation~\ref{eq:iw} with privacy budget $(\epsilon-\epsilon_0, \delta-\delta_0)$. 

\paragraph{Pretrain in $\mathcal{S}$ and finetune in $\mathcal{T}$ (PT)} 
Recent work~\cite{ganesh2023public} has shown that pretraining a model in a different domain to target domain with a large population reduces the amount of data required for private finetuning. 
We consider pretraining in $\mathcal{S}$ with a large cohort size $C$ and finetune in $\mathcal{T}$ with a small cohort size $\alpha C$ so that the latency for finetuning stays roughly the same as pretraining. 
We enforce that the population of $\mathcal{S}$ and  $\mathcal{T}$ to be disjoint so that both pretraining in $\mathcal{S}$ and finetuning in $\mathcal{T}$ can spend privacy budget of $(\epsilon, \delta)$ with parallel composition~\cite{mcsherry2009privacy}. 

\paragraph{Instance weighted pretraining (IWPT)} Domain adaptive pretraining~\cite{gururangan-etal-2020-dont} (DAPT) demonstrated the benefits of pretraining with in-domain data.
However because the in-domain population is limited and it is inefficient to train with PFL, we consider instance weighted pretraining on $\mathcal{S}$ with objective weighted by Equation~\ref{eq:iw} as an approximation of DAPT. 

\section{Experiment}
\subsection{Datasets}
To simulate a practical situation, we focus on using real-world datasets with user identifiers so that we can partition data naturally by users.
In particular, we use two sources of data: (1) Reddits \citep{caldas_leaf_2019} and (2) Common Voice (CV) \citep{ardila_common_2020-1} to build two  datasets for DA tasks.
More data processing details are described in Appendix~\ref{app:data}.

\paragraph{SubReddits}
The first constructed DA dataset consists of only the Reddits dataset with different \emph{SubReddit} topics.
We treat a set of similar subreddits as a domain, where we choose stock-related subreddits \{\textit{Superstonk, amcstock, wallstreetbets, GME, Wallstreetsilver}\} as $\mathcal{S}$ and news-related subreddits \{\textit{news, worldnews, politics}\} as $\mathcal{T}$.
As a result of the constrution, we have $117,708$ clients in total and $14,072$ clients (about $12\%$) have target domain data as well as source domain data.

\paragraph{CV\&Reddits} 
The other constructed DA dataset combines Reddit (typed texts) and CV (transcribed audios) which simulates the difference between spoken and typed texts domains.
We treat texts from Reddits as $\mathcal{S}$ and texts from CV as $\mathcal{T}$.
CV dataset has $68,312$ clients.
We randomly select clients from Reddit dataset so that the total number of clients is 10 times more than the number of clients with Common Voice data.

\subsection{Experiment Setup}
Since there usually is a constraint on the client device storage and communication cost in real world applications, we consider a rather simple LSTM following \cite{mcmahan_learning_2018-1}.
We evaluate the performance of our approaches by the perplexity (PPL) in $\mathcal{T}$.
We divide clients into training, validation, and test sets with the ratio of 6:2:2, where the hyper-parameters are tuned on validation set.

We consider two baselines with unweighted objective: (1) training with cohort sizes $\alpha C$ and $C$ in $\mathcal{T}$ only where $\alpha$ is the proportion of the devices with data from $\mathcal{T}$, and (2) training with cohort size $C$ in $\mathcal{S}\cup\mathcal{T}$. 
We also experiment the baseline (2) with domain adaptive layers proposed in domain-shared/domain-specific representations (DSDSR)~\cite{moriokal_language_2018} and DEMix~\cite{gururangan_demix_2021}.

To speed up the training process, we follow~\cite{mcmahan_learning_2018-1} and set the cohort size $C$ to be 5,000 for adjusting the magnitude of noise in the DP analysis and to be 400 for actual training.
We set $\alpha=0.1$ i.e. the ratio of population between $\mathcal{T}$ and $\mathcal{S}$.
All experiments last for 2,000 server iterations and 1 client iteration.
For fine-tuning experiments (PT and IWPT), we split the server iterations into 1,000 and 1,000 for pretraining and fine-tuning, respectively.
We use FedAdam~\cite{reddi_adaptive_2022} as the server optimizer with learning rate 0.1 and SGD as the client optimizer with learning rate 0.5. 

We set the total privacy parameters to $(\epsilon, \delta) = (2, 10^{-6})$ throughout the experiments.
The clipping bound of Gaussian mechanism in PFL is set to 0.5.
For IW and IWPT, we allocate $(\epsilon_0, \delta_0)=(0.8, 0)$ for estimating unigrams with Geometric Mechanism~\cite{ghosh2009universally}, and $(\epsilon,\delta) = (1.2, 10^{-6})$ for model training.
To bound the sensitivity for the unigram estimation, we use at most 5 sequences with each of which have a fixed length of 10 tokens.

\subsection{Results}
\begin{table}[t]
\centering
\begin{tabular}{cc|cc}
\toprule
Dataset & Approach & val PPL & test PPL \\
\midrule
\multirow{8}{*}{SubReddits} & $\mathcal{T}$  w. $\alpha C$ & 415.35 & 414.61\\
 & $\mathcal{T}$  w. $C$ & 358.37 & 358.06\\
 & $\mathcal{S}\cup\mathcal{T}$ & 398.82 & 400.90 \\
  & DSDSR & 379.06 & 380.79 \\
  & DEMix & 395.02 & 396.68 \\
 & IW & 354.81 & 356.37 \\
 & PT & 369.57 & 370.24\\
 & IWPT & \textbf{346.85} & \textbf{347.78} \\
\midrule 
\multirow{8}{*}{CV\&Reddits} & $\mathcal{T}$  w. $\alpha C$ & 302.43 & 320.13\\
  & $\mathcal{T}$  w. $C$ & 215.96 & 241.42\\
 & $\mathcal{S}\cup\mathcal{T}$ & 275.85 & 302.64 \\
  & DSDSR & 206.07 & 233.43 \\
 & DEMix & 226.09 & 255.87 \\
 & IW & 218.61 & 234.22 \\
 & PT & 195.49 & 217.62\\
 & IWPT & \textbf{180.98} & \textbf{203.14} \\
\bottomrule
\end{tabular}
\caption{Perplexity scores for baselines and different DA approaches. We set cohort size $C=5,000$ and $\alpha=0.1$.}
\label{ppl-table}
\end{table}

Table~\ref{ppl-table} summarizes the model performance of our algorithm and baseline approaches.
First, we observe from results on both datasets that training models with a small cohort size $\alpha C$ in $\mathcal{T}$ only has the worst performance, which is because the DP noise dominates the model update in each iteration.
Increasing the cohort size to $C$ can greatly improve the utility for $\mathcal{T}$ only. 
However, according to the argument made in Section~\ref{sec:sample}, we need to trade off a significant amount of training time for a larger $C$.

For the baseline trained with large population size in $\mathcal{S}\cup\mathcal{T}$ and large cohort size, simply treating source domain data as target domain data does not improve the performance much possibly because source domain data is from a different distribution and has larger volume which dominates the model update.
DA specific architectures (DSDSR and DEMix) improved this baseline to some extent but can incur more communication cost due to larger model sizes.

On the other hand, both IW and PT approaches outperform the baseline methods, and are better than the DA specific architectures on SubReddits dataset. 
The combined IWPT approach achieves the best PPL, 13\% and 30\% lower than the baseline models on SubReddits and CV\&Reddits, respectively.

\section{Conclusion and Future Work}
We demonstrate that the population size being small in PFL not only harms the model quality but also slows down the LM training.
With our proposed domain adaptation algorithm, which weights the source domain data appropriately, we show it is possible to have a larger population and train LMs with a better quality in a timely manner.
Since instance weighting framework can be applied to other data domains than languages, extending the framework to other domains, e.g., images, is a direction for future work.


\bibliography{ref}
\bibliographystyle{icml2023}

\newpage
\appendix
\onecolumn


\section{Dataset Preprocessing}
\label{app:data}
The set of known vocabulary is built with target domain data in the training set by choosing top 10K frequent words and is assumed to be known in advance.
Every word outside the vocabulary list is mapped as \texttt{<UNK>}.
We append \texttt{<BOS>} to the beginning and \texttt{<EOS>} to the end of every sentence. 
Within each user, we limit the number of tokens (words) to 1,600 and cut the input sentences into sequences of length 10.
When a sequence has length less than 10, we append \texttt{<PAD>} to make it have length 10.

\section{Proof of Proposition 3.1}
\label{sec:appendix}

Here we restate and prove Proposition 3.1. 
Let $m = N-Np$ be the number of current unavailable devices, $k = C-Npq$ be the number of devices needed for current PFL iteration. 

\begin{proposition}
Assume that the time for the $i$-th unavailable device becoming available and being sampled for training is $T_i\sim \mathrm{Exponential} (\lambda)$. 
Let $U_k$ be the random variable which describes the time when the first $k$ devices become available and are sampled. Then
\begin{equation}
\nonumber
\frac{1}{\lambda}\cdot\frac{C-Npq}{N(1-p)+1}\leq \mathbb{E}[U_k]  \leq \frac{C}{\lambda(N-C)}.
\end{equation}
\end{proposition}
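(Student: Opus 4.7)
The plan is to recognize $U_k$ as the $k$-th order statistic of $m$ independent $\mathrm{Exp}(\lambda)$ random variables and derive the standard closed form
\[
\mathbb{E}[U_k] \;=\; \frac{1}{\lambda}\sum_{i=m-k+1}^{m}\frac{1}{i}.
\]
I would obtain this via the memoryless property: the waiting time between the $(j-1)$-th and $j$-th event is the minimum of the $m-j+1$ remaining independent $\mathrm{Exp}(\lambda)$ residuals, hence distributed as $\mathrm{Exp}((m-j+1)\lambda)$, and $U_k$ telescopes as the sum of these mutually independent gaps for $j = 1, \ldots, k$. Taking expectations and re-indexing gives the displayed formula, after which both inequalities reduce to elementary estimates on a harmonic-type sum.

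For the lower bound, I would use the crude term-by-term inequality $\frac{1}{i} \geq \frac{1}{m+1}$, valid for every $i \in \{m-k+1, \ldots, m\}$, to conclude $\sum_{i=m-k+1}^{m} \frac{1}{i} \geq \frac{k}{m+1}$. Substituting $m = N(1-p)$ and $k = C - Npq$ immediately yields $\mathbb{E}[U_k] \geq \frac{1}{\lambda}\cdot\frac{C-Npq}{N(1-p)+1}$.

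For the upper bound I would start from the analogous estimate $\sum_{i=m-k+1}^{m}\frac{1}{i} \leq \frac{k}{m-k+1} \leq \frac{k}{m-k}$, and then reduce to the claim $\frac{k}{m-k} \leq \frac{C}{N-C}$. Cross-multiplying and simplifying, this last inequality is equivalent to $Np(Nq-C) \geq 0$, and this algebraic reduction is the one step I expect to require real care: it does not follow from the proposition's hypotheses alone but from the over-sampling convention $q > C/N$ stated earlier in Section~\ref{sec:sample}. Once that assumption is imported (together with $p \geq 0$), the inequality holds, and substituting back in terms of $N$, $p$, $q$, $C$ yields the claimed $\frac{C}{\lambda(N-C)}$, completing the proof.
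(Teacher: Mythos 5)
Your proof is correct and follows essentially the same route as the paper: both rest on the same key decomposition of $U_k$ into independent exponential spacings via the memoryless property, yielding $\mathbb{E}[U_k]=\frac{1}{\lambda}\sum_{i=m-k+1}^{m}\frac{1}{i}$. The only divergence is in bounding the harmonic sum: you use direct term-by-term estimates ($\frac{1}{i}\geq\frac{1}{m+1}$ and $\frac{1}{i}\leq\frac{1}{m-k+1}$), whereas the paper compares the sum to integrals $\int \frac{dx}{x}$ and then applies $1-\frac{1}{x}\leq\ln x\leq x-1$ — which ultimately lands on the identical bounds $\frac{k}{m+1}$ and $\frac{k}{m-k}$, so your version is the more economical of the two. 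You also correctly isolate the one non-trivial step, namely that $\frac{k}{m-k}\leq\frac{C}{N-C}$ reduces to $Np(Nq-C)\geq 0$ and therefore requires importing the over-sampling convention $q\geq C/N$ (the paper uses the same assumption, just packaged as an algebraic rewriting of the denominator); both arguments also implicitly need $m-k>0$ and $N>C$ for the denominators to be positive.
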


\begin{proof}
We first state two properties about Exponential distribution:
\begin{enumerate}
\item The minimum of $n$ exponential random variables is exponential: $\min\{T_1, \dots T_n\}\sim \mathrm{Exponential} (n\lambda)$~\cite{parzen1960modern}.
\item The exponential random variable $T_i$ is a memoryless: $P(T_i > a + b | T_i > b) = P(T_i > a)$~\cite{parzen1960modern}.
\end{enumerate}
In our definition, $U_1 = \min\{T_1, \dots T_m\}\sim \mathrm{Exponential} (m\lambda)$ from the first property.

WLOG, let $U_i=\min\{T_i, \dots, T_m\} | T_j > U_{i-1}$ where $j=\{i,\dots, m\}$ and $i>1$, then with the second property we can derive:
\begin{equation}
\nonumber
\begin{aligned}
P(U_i - U_{i-1} > a) &=  P(U_i - U_{i-1} > a | U_i > U_{i-1}) \\
&= P(U_i > a + U_{i-1} | U_i > U_{i-1}) \\ 
& = P(U_i > a) \\ 
& = P(\min\{T_i, \dots, T_m\} > a).
\end{aligned}
\end{equation}
Thus, $U_i - U_{i-1} \sim \mathrm{Exponential} ((m-i+1)\lambda)$ from the first property.

Then we have:
\begin{equation}
\nonumber
\begin{aligned}
\mathbb{E}[U_k] &=  \mathbb{E}[\sum_{i=2}^{k}(U_i - U_{i-1}) + U_1] = \sum_{i=2}^{k} \mathbb{E}[(U_i - U_{i-1})] + \mathbb{E}[U_1] = \frac{1}{\lambda}\sum_{x=m-k+1}^{m}\frac{1}{x}.
\end{aligned}
\end{equation}
Since $\frac{1}{x}$ is convex, we know from the lower and upper Riemann sum that: 
\begin{equation}
\nonumber
\int_{m-k+1}^{m+1}\frac{1}{x}dx\leq \sum_{x=m-k+1}^{m}\frac{1}{x} \leq \int_{m-k}^{m}\frac{1}{x}dx
\end{equation}
Then the lower bound can be derived as follows:
\begin{equation}
\nonumber
\begin{aligned}
\int_{m-k+1}^{m+1}\frac{1}{x}dx &= \ln{\frac{m+1}{m-k+1}} \\ 
&\geq 1 - \frac{m-k+1}{m+1} \\
&=\frac{k}{m+1} = \frac{C-Npq}{N(1-p) + 1},
\end{aligned}
\end{equation}
where the first inequality comes from the fact that $1-\frac{1}{x}\leq \ln{x}\leq x-1$. 

Similarly, the upper bound can be derived as follows:
\begin{equation}
\nonumber
\begin{aligned}
\int_{m-k}^{m}\frac{1}{x}dx&=\ln\frac{m}{m-k}\\
&\leq \frac{m}{m-k} - 1 \\ 
& = \frac{k}{m-k} \\ 
& = \frac{C-Npq}{N(1-p) - (C-Npq)} \\
& = \frac{C}{N\frac{(1-p)}{(1-\frac{Nq}{C}p)} - C} \\ 
& \leq \frac{C}{N-C},
\end{aligned}
\end{equation}
where the last inequality comes from the fact that server tends to oversample $q \geq \frac{C}{N}$.  
\end{proof}

\end{document}